\newtheorem{theorem}{Theorem}
\journal{Knowledge-Based Systems}
\begin{document}

\begin{frontmatter}



\title{HeGMN: Heterogeneous Graph Matching Network for Learning Graph Similarity}

\author[label1]{Shilong Sang}
\ead{1022040812@njupt.edu.cn}
\author[label1,label2,label3]{Ke-Jia Chen\corref{cor1}}
\ead{chenkj@njupt.edu.cn}
\author[label1,label2,label3]{Zheng Liu}
\ead{zliu@njupt.edu.cn}
\cortext[cor1]{Correspondence to: No.9 Wenyuan Road, Nanjing, Jiangsu, China.}
\affiliation[label1]{
            organization={School of Computer Science},
            addressline={Nanjing University of Posts and Telecommunications},
            city={Nanjing},
            postcode={210023},
            state={Jiangsu},
            country={China}
            }
\affiliation[label2]{
            organization={Jiangsu Key Laboratory of Big Data Security \& Intelligent Processing},
            addressline={Nanjing University of Posts and
Telecommunications},
            city={Nanjing},
            postcode={210023},
            state={Jiangsu},
            country={China}
            }
\affiliation[label3]{
            organization={State Key Laboratory of Novel Software Technology},
            addressline={Nanjing University},
            city={Nanjing},
            postcode={210023},
            state={Jiangsu},
            country={China}
            }
            
\begin{abstract}
Graph similarity learning (GSL), also referred to as graph matching in many scenarios, is a fundamental problem in computer vision, pattern recognition, and graph learning. However, previous GSL methods assume that graphs are homogeneous and struggle to maintain their performance on heterogeneous graphs. To address this problem, this paper proposes a Heterogeneous Graph Matching Network (HeGMN), which is an end-to-end graph similarity learning framework composed of a two-tier matching mechanism. Firstly, a heterogeneous graph isomorphism network is proposed as the encoder, which reinvents graph isomorphism network for heterogeneous graphs by perceiving different semantic relationships during aggregation. Secondly, a graph-level and node-level matching modules are designed, both employing type-aligned matching principles. The former conducts graph-level matching by node type alignment, and the latter computes the interactions between the cross-graph nodes with the same type thus reducing noise interference and computational overhead. Finally, the graph-level and node-level matching features are combined and fed into fully connected layers for predicting graph similarity scores. In experiments, we propose a heterogeneous graph resampling method to construct heterogeneous graph pairs and define the corresponding heterogeneous graph edit distance, filling the gap in missing datasets. Extensive experiments demonstrate that HeGMN consistently achieves advanced performance on graph similarity prediction across all datasets.
\end{abstract}

\begin{keyword}
Heterogeneous graph, Graph matching network, Graph similarity learning, Type-aligned matching principles
\end{keyword}

\end{frontmatter}

\section{Introduction}
Graph-structure data is present in various applications, such as molecular compound networks, citation networks, and social networks. Graph similarity learning, also referred to as graph matching, aims to measure the similarities between a pair of graph-structured objects, stands out as one of the most prominent topics in the field of graph machine learning. It plays a pivotal role in numerous foundational tasks such as drug-drug interaction prediction (DDI) \cite{cardoso2020collection, coupry2022application}, code similarity prediction \cite{dai2023study}, malicious program detection \cite{noble2003graph, wang2019heterogeneous}, expert community network localization \cite{li2023semi}, cross-domain knowledge transfer \cite{ebsch2020using, qi2021unsupervised, wu2020neighborhood}, image semantic matching \cite{guo2018neural}, and so on.

Graph edit distance (GED) and maximum common subgraph (MCS) are two traditional measures for graph similarity calculation, which can be calculated by heuristic methods \cite{EJ2024ged, fankhauser2011speeding, riesen2009approximate}. However, these methods are computationally expensive and known as NP-hard. To overcome this limitation, current solutions formulate graph similarity computation into a learning problem. A common practice is that an inductive graph neural network (GNN) model is trained on pairs of graphs and then equipped with a matching mechanism to approximate graph similarity scores \cite{bai2019simgnn, tan2023exploring}. The trained model can later predict similarity estimation for unseen graph pairs. 

However, real-world graphs are usually heterogeneous, wherein both nodes and edges exhibit distinct semantics. Existing GSL methods predominantly conduct research in a homogeneous graph environment, which may result in suboptimal performance when applied to heterogeneous graphs, as the matching result is highly correlated to the element types. As shown in Fig.~\ref{fig:1}, the molecule on the left is ethylene, and the molecule on the right is difluoroethylene. Ethylene contains a carbon-carbon double bond and four hydrogen atoms, while difluoroethylene replaces two hydrogen atoms with two fluorine atoms. In homogeneous graph scenarios, ethylene and difluoroethylene have the same structure. In contrast, our model can also differentiate the atoms (i.e., node types) in heterogeneous graph scenarios. To verify this hypothesis, we conducted a preliminary experiment by constructing a heterogeneous graph dataset ACM1000 (see Section \ref{subsection1} for experiment details) and comparing the performance of several GSL baselines on ACM1000 and a benchmark homogeneous graph dataset AIDS700nef. The experimental results (Fig.~\ref{fig:pre}) indicate that all GSL baselines perform much worse on ACM1000 than on AIDS700nef, with a rise ranging from 25\% to 76\% in the MSE indicator.
\begin{figure}
    \centering
    \includegraphics[width=0.6\linewidth]{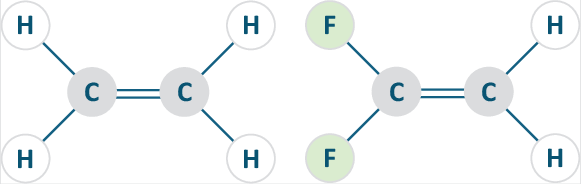}
    \caption{Molecular structures of ethylene (left), a two-carbon molecule with one double bond and four hydrogen atoms ($C_2H_4$), and difluoroethylene (right), a derivative of ethylene where two hydrogen atoms are replaced by fluorine atoms ($C_2H_2F_2$).}
    \label{fig:1}
\end{figure}

\begin{figure}[!htbp]
  \centering
  \includegraphics[width=7cm]{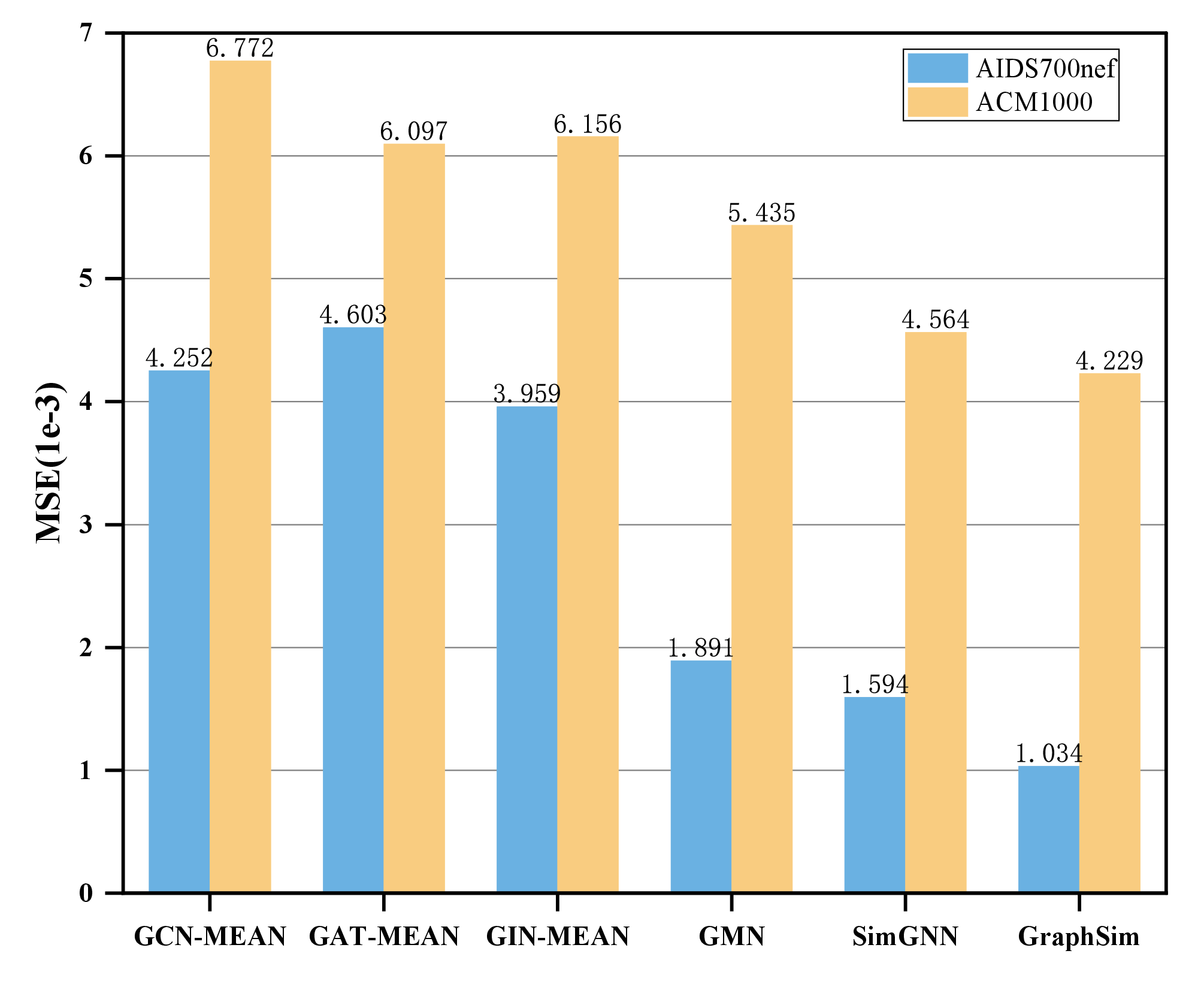}  
  \caption{Comparison of several GSL baselines on a homogeneous graph dataset (AIDS700nef) and a heterogeneous graph dataset (ACM1000).}
  \label{fig:pre}
\end{figure}

To address the aforementioned challenges, this paper proposes an end-to-end heterogeneous graph similarity learning (HGSL) framework, namely, Heterogeneous Graph Matching Network (HeGMN). To start with, the HGSL datasets are constructed, where we first randomly sample subgraphs with different sizes from real heterogeneous graphs using the BFS algorithm and then build graph pairs from the sampling pool. Meanwhile, the heterogeneous graph edit distance (HGED) is defined to quantify the degree of similarity between each graph pair. For constructing the HeGMN, a heterogeneous graph isomorphism network (HGIN) is proposed, which reinvents GIN by leveraging node type information during the message passing process to better aggregate the diverse structures and semantics. Subsequently, a type-aligned graph matching algorithm is proposed to compute type embeddings for graph-level matching, and a type-aligned node matching algorithm is proposed to calculate the matching degree between cross-graph node pairs with the same type based on an interaction attention mechanism. Finally, the node-level and the graph-level matching features are concatenated and fed to fully connected layers to predict the HGED value.

Overall, Our main contributions are summarized as follows:
\begin{itemize}
    \item To the best of our knowledge, we are the first to investigate the deep GSL method in heterogeneous graph scenarios, and propose an end-to-end HGSL model named HeGMN.
    \item The definition of heterogeneous graph edit distance is proposed to measure the matching degree between two heterogeneous graphs. Additionally, the datasets tailored specifically for the HGSL problem are constructed.
    \item In HeGMN, a heterogeneous graph isomorphism network is designed to effectively learn structures involving different semantic types, yielding representations more suitable for the GSL task. More critically, a type-aligned node-level matching method is proposed, which improves the matching precision and efficiency.
    \item Extensive experiments demonstrate the superiority of the proposed model compared with existing GSL baselines. The effectiveness of each proposed module is also verified by ablation analysis.
\end{itemize}

\section{Related Work}
\subsection{Graph Similarity Learning}
Traditional graph similarity calculation methods are based on graph isomorphism or other structural similarity measures, such as graph edit distance (GED) \cite{riesen2013novel, bunke1983distance} and maximum common subgraph (MCS) \cite{bunke1998graph}. However, these methods are NP-hard and have limitations in real-world applications. Subsequently, kernel-based graph similarity measures emerged \cite{borgwardt2005shortest, yan2005substructure, yoshida2019learning}, which employ kernel functions like random walks to compute the embeddings for individual graphs, and then calculate the matching degree through inner product operation. Nevertheless, kernel functions are manually designed and cannot autonomously learn graph features.

With the prevalence of graph neural networks (GNNs), a series of deep graph similarity learning (GSL) methods have been developed. Initial methods (i.e., GCN-MEAN \cite{kipf2016semi}, GIN-MEAN \cite{xu2018powerful}, etc.) utilize GNNs and mean pooling to represent graphs, followed by fully connected layers to obtain similarity scores. However, these methods lack fine-grained matching at the node level. SimGNN \cite{bai2019simgnn} goes beyond graph-level comparisons by pioneering the utilization of histograms to feature the similarity matrix of the cross-graph node pairs. Later, GraphSim \cite{bai2020learning} improves SimGNN by addressing the non-differentiability issue in the histogram calculation, which preserves node positional information through node ranking and uses CNN to fuse matching matrices across different layers. GMN \cite{li2019graph} incorporates cross-graph node interaction attention to obtain richer representations, enabling more precise matching. In addition to improving the node representation, the attention mechanism is subsequently used in the matching matrix. NAGSL \cite{tan2023exploring} calculates the inter-graph node pair attention and then uses self-attention mechanism to align the node matching matrix for similarity learning. Contrastive learning is also introduced in graph similarity learning. CGMN \cite{jin2022cgmn} employs a graph augmentation method to create views for input graphs, and then utilizes a contrastive loss to minimize the distance between embeddings of similar graphs. CGSim \cite{wang2023contrastive} proposes a dual-contrastive learning module for both node-graph and graph-graph matching mechanisms. Due to the high time consumption associated with node-level matching, some methods strive to balance between matching accuracy and efficiency. MGMN \cite{ling2021multilevel} reduces the time consumption by calculating the similarity between node-graph pairs instead of node pairs. ERIC \cite{zhuo2022efficient} expedites the training phase through alignment regularization and then directly employs the learned graph-level representations for similarity calculation during the inference phase. GraSP \cite{zheng2024grasp} enhances node features using positional encoding instead of cross-graph node interactions, thereby avoiding the significant computational overhead.

Despite the above achievements, existing GSL methods are primarily oriented to homogeneous graphs, without noticing the impact of element types on graph matching. When developing deep heterogeneous GSL methods, the foremost issue is how to represent heterogeneous graphs.

\subsection{Heterogeneous Graph Representation Learning}
The early heterogeneous graph representation learning often uses meta-paths to model semantic relationships. In metapath2vec \cite{dong2017metapath2vec}, nodes of different types are mapped to a low-dimensional vector space through random walks. HAN \cite{wang2019heterogeneous} combines attention mechanism and multi-layer neural networks based on meta-paths to capture the importance at both node and semantic levels. GTN \cite{yun2019graph} autonomously discovers valuable meta-paths without manual selection. However, the methods based on meta-path have high spatiotemporal complexity, so subsequent works explore alternative methods. HetGNN \cite{zhang2019heterogeneous} generates neighbors for nodes based on random walks, and then leverages Bi-LSTM to aggregate node features for each type, yielding more precise representations. Lv et al. \cite{lv2021we} reexamined previous models and argued that meta-paths are not obligatory for heterogeneous graphs. A slight modification of graph attention network (GAT) \cite{velivckovic2018graph}  to its heterogeneous version (i.e., Simple-HGN) can outperform previous methods. Inspired by Simple-HGN, RE-GNN \cite{wang2023enabling} uses a single parameter to represent the importance of each relation type and introduces self-loops related to node types, enabling GNNs to handle heterogeneous graphs.

However, none of the above methods are specifically designed for the HGSL task, where both structural and semantic features of heterogeneous graphs are equally important. This paper explores a new representation learning method and a new matching method tailored for heterogeneous graphs.

\section{Preliminaries}
This section introduces basic terminology and problem description involved in the method. The important notations in this paper can be found in Table~\ref{tab:notations}.

\begin{table}[t]\small
    \centering
    \caption{Important notations.}
    \begin{tabular}{c c}
    \toprule
    Symbol & Definition \\
    \midrule
        $\boldsymbol{G}$ & heterogeneous graph\\
        $\Phi$ & node type mapping function \\
        $\Psi$ & edge type mapping function \\
        $\boldsymbol{C}$ & the set of node types \\
        $\boldsymbol{R}$ & the set of edge types \\
        $i, j$ & graph index \\
        $n, m$ & node index \\
        $\boldsymbol{X}$ & initial node feature matrix \\
        $\boldsymbol{Z}$ & node embedding matrix learned by HGIN \\
        $\boldsymbol{T}$ & type embedding matrix \\
        $\bm{t_{i, c}}$ & embedding of type $c$ in graph $\boldsymbol{G_i}$ \\
        $\mathbbm{I}(\cdot)$ & an indicator function \\
        $\mathcal{N}(n)$ & the neighbor set of node $n$ \\
        $\boldsymbol{W}$ & a learnable matrix \\
        $\boldsymbol{S}$ & similarity matrix \\
        $\rho$ & Spearman’s rank correlation coefficient \\
        $\tau$ & Kendall’s rank correlation coefficient \\
    \bottomrule
    \end{tabular}
    \label{tab:notations}
\end{table}

\subsection{Heterogeneous Graph}
A heterogeneous graph is defined as $\boldsymbol{G} = (\boldsymbol{V}, \boldsymbol{E}, \boldsymbol{X}, \Phi, \Psi)$. Here, $\boldsymbol{V}$ represents the set of nodes with a node type mapping function $\Phi: \boldsymbol{V} \to \boldsymbol{C}$ and $\boldsymbol{E}$ is the set of edges with an edge type mapping function $\Psi: \boldsymbol{E} \to \boldsymbol{R}$, where $\boldsymbol{C}$ and $\boldsymbol{R}$ denote the predefined set of node types and edge types, respectively, and $|\boldsymbol{C}| + |\boldsymbol{R}| > 2$ since $\boldsymbol{G}$ is heterogeneous. Each node $v_n \in \boldsymbol{V}$ has a type $c_n = \Phi(v_n) \in \boldsymbol{C}$, and each edge $e_{n,m} \in \boldsymbol{E}$ has a type $r_{n, m} = \Psi(e_{n,m}) \in \boldsymbol{R}$. $\boldsymbol{X}$ denotes the feature matrix of the nodes.

\subsection{Heterogeneous Graph Edit Distance}
Graph Edit Distance (GED) measures the minimum cost of edit operations to transform one graph into another and has been widely applied in graph similarity search \cite{bai2019simgnn, chang2022accelerating}, graph classification \cite{wu2020neighborhood}, and graph matching \cite{jin2022cgmn}. This paper extends the concept of GED to heterogeneous graph scenarios, defining Heterogeneous Graph Edit Distance (HGED).    

The GED between heterogeneous graphs $\boldsymbol{G_i}$ and $\boldsymbol{G_j}$ ($\boldsymbol{G_i}, \boldsymbol{G_j} \in \boldsymbol{G}$), denoted as $\operatorname{HGED}(\boldsymbol{G_i}, \boldsymbol{G_j})$, refers to the minimum cost of edit operations required to transform $\boldsymbol{G_i}$ into $\boldsymbol{G_j}$.
\begin{eqnarray}\label{eq:HGED}
\operatorname{HGED}(\boldsymbol{G_i}, \boldsymbol{G_j}) = \mathop{min}_{(edit_1, ..., edit_{\mathscr{L}}) \in \gamma(\boldsymbol{G_i}, \boldsymbol{G_j})}
\sum^{\mathscr{L}}_{\ell = 1}\operatorname{cost}(edit_{\ell})
\end{eqnarray}
where $\gamma(\boldsymbol{G_i},\ \boldsymbol{G_j})$ represents the set of all edit paths, and $\operatorname{cost}(edit_\ell)$ measures the cost of the edit operation $edit_{\ell}$. In addition to adding/removing nodes/edges, the edit operation on heterogeneous graphs also involves \textit{modifying types}, as both nodes and edges have distinct types. However, it is arbitrary to regard \textit{modifying types} as one single edit, since the type represents the semantics of elements and often requires two edit operations for alignment. In heterogeneous graphs, node/edge types constitute essential semantic constraints. Unlike homogeneous GED where type preservation is implicit, modifying an element's type in HGED requires explicit edit operations: (1) deleting the original element with its legacy type, and (2) inserting a new element with the target type. This two-step process ensures semantic consistency, as type modifications cannot be achieved through in-place alteration under the standard graph edit distance framework \cite{bunke1983distance}. Therefore, we formally define the cost of type conversion as:
\begin{eqnarray}
    cost(elm_{old} \to elm_{new}) = cost_{del}(elm_{old}) + cost_{add}(elm_{new})
\end{eqnarray}
where $elm$ represents nodes and edges in a heterogeneous graph, and $elm_{old}, elm_{new}$ share identical structural attributes but differ in type. 

As shown in  Fig.~\ref{fig:hged}, the $\operatorname{HGED}(\boldsymbol{G_1}, \boldsymbol{G_2})$ is calculated as 5.

\begin{figure}[!htbp]
  \centering
  \includegraphics[width=\linewidth]{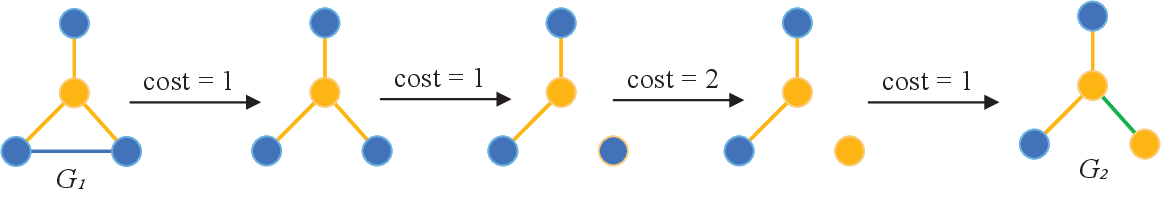}
  \caption{A toy example of HGED calculation, where different colors represent different types. The edit cost from the leftmost graph to the rightmost graph is 5.}
  \label{fig:hged}
\end{figure}

\subsection{Heterogeneous Graph Similarity Learning}
The task of heterogeneous graph similarity learning (HGSL) aims to learn the similarity between heterogeneous graphs by matching their structural and semantic information. In this paper, HGSL is formulated as a HGED regression task.

Given two heterogeneous graphs $\boldsymbol{G_i}, \boldsymbol{G_j}$, and their normalized HGED value $s$, the objective of HGSL is to learn a similarity scoring function $\operatorname{f_{\theta}}$ that minimizes the loss:
\begin{eqnarray}\label{eq:e1050}
\mathcal{L} = \frac{1}{|\boldsymbol{T_r}|}\sum_{(i, j) \in \boldsymbol{T_r}} \operatorname{MSE} (\operatorname{f_{\theta}}(\boldsymbol{G_i},\boldsymbol{G_j}) \to s)
\end{eqnarray}
where $\boldsymbol{T_r}$ represents the set of trained graph pairs, $\theta$ is parameter of the model, $s \in \mathbbm{R}$ represent the ground-truth. In this paper, we model the function as an end-to-end neural learning paradigm, which uses GNNs as heterogeneous graph encoder and then quantify the discrepancy between $\boldsymbol{G_i}$ and $\boldsymbol{G_j}$.

\section{Method}
\subsection{Overview}
In this paper, Heterogeneous Graph Matching Network (HeGMN) is proposed to solve the HGSL problem. The overall framework is shown in Fig.~\ref{fig:model}, which consists of four main modules: (1) heterogeneous node encoding; (2) type-aligned graph matching; (3) type-aligned node matching; (4) graph similarity prediction. 

In the heterogeneous node encoding module, a siamese heterogeneous graph isomorphic network is proposed to learn the node representation for each heterogeneous graph pair.

In the type-aligned graph matching module, the sum pooling is first conducted on the node embedding matrix $\boldsymbol{Z_i}$ and $\boldsymbol{Z_j}$ respectively to obtain corresponding $|\boldsymbol{C}|$ type embeddings. Then, a multi-layer perceptron is applied to compute the matching score between type embeddings $\boldsymbol{T_i}$  and $\boldsymbol{T_j}$. Finally, an attention mechanism is used to fuse the matching results of different types.  

In the type-aligned node matching module, we propose to match the cross-graph nodes with the same type, which can not only eliminate the interference caused by matching nodes with different semantics but also reduce the computational complexity.

In the graph similarity prediction module, the graph-level and node-level matching features are combined and fed into fully connected layers. The overall network is trained in the HGED regression task with the mean square error (MSE) as the loss function.

Each module is detailed as follows.
\begin{figure*}
  \centering
  \includegraphics[width=\textwidth]{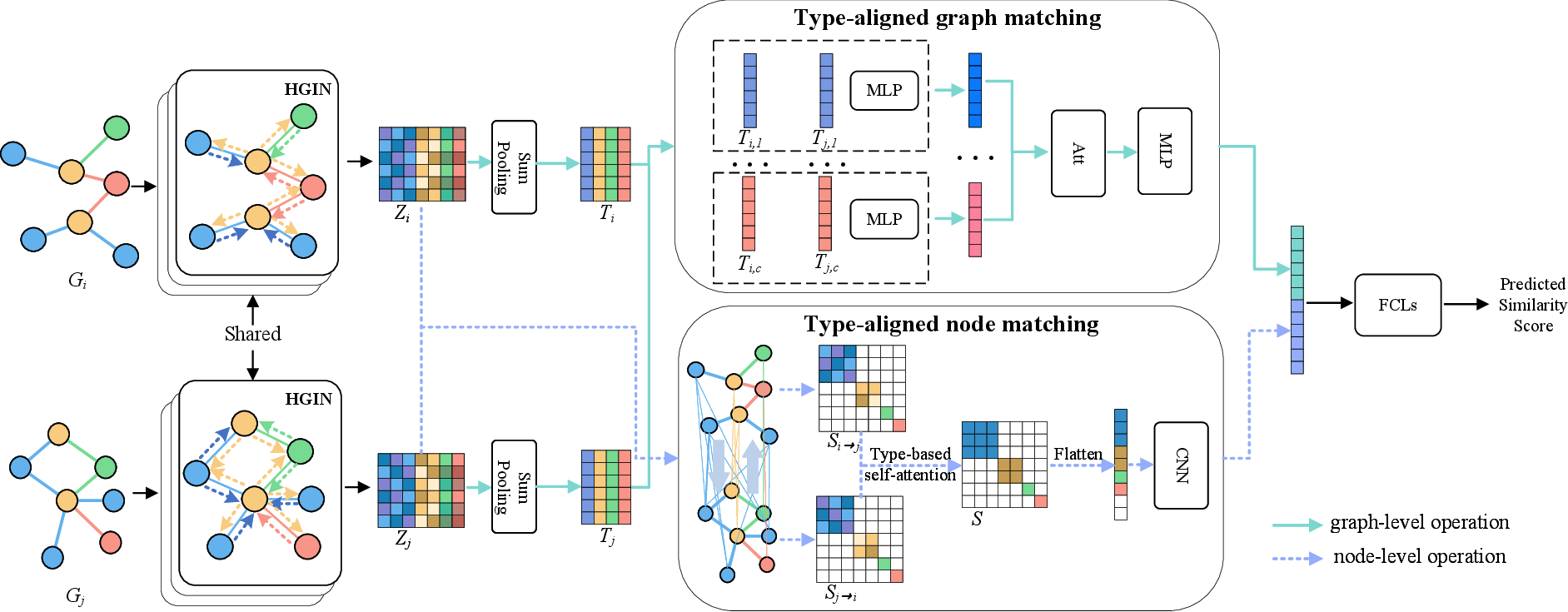}
  \caption{Overall framework of HeGMN. The framework contains four modules: heterogeneous node encoding, type-aligned graph matching, type-aligned node matching, and graph similarity prediction. The first module learns the node representation by siamese HGIN. The second module calculates the similarity of type embedding by MLP. The third module matches the cross-graph nodes with the same type. The fourth module combines the output vectors of the second and third modules and feeds them into FCL to predict the similarity. Nodes and edges in different colors represent different types in the heterogeneous graph.}
  \label{fig:model}
\end{figure*}

\subsection{Heterogeneous Node Encoding - HGIN}
In the preliminary experiments (refer to the results on AIDS700nef in Fig.~\ref{fig:pre}), GIN \cite{xu2018powerful} shows superior performance compared to GCN and GAT in graph-level matching. Moreover, GIN requires fewer parameters and less computational complexity, which benefits from its injective nature when passing message. Formally, its message passing strategy is as follows:
\begin{eqnarray}\label{eq:e2}
\bm{z_n}^{(l)} = \operatorname{MLP}^{(l)}((1 + {\epsilon}^{(l)})\bm{z_n}^{(l-1)} + \sum_{m \in \boldsymbol{\mathcal{N}}(n)} \bm{z_m}^{(l - 1)})
\end{eqnarray}
where $\bm{z_n}^{(l)}$ denotes the embedding of node $n$ at the $l$-th layer, $\bm{z_n}^{(0)}$ is $\boldsymbol{X}$, $\epsilon$ is a learnable parameter and $\boldsymbol{\mathcal{N}}(n)$ is the neighbor set of node $n$.

However, the matching performance of GIN also deteriorates badly on the heterogeneous graph dataset ACM1000. We surmise that the diverse relationships between nodes in heterogeneous graphs should be encoded differently. Therefore, we propose the Heterogeneous Graph Isomorphism Network (HGIN), which additionally designs relational attention mechanism for different edge types to better aggregate semantic information (see Eq.~\ref{eq:e3} and Eq.~\ref{eq:e100}).
\begin{eqnarray}\label{eq:e3}
\bm{z_n}^{(l)} = \operatorname{MLP}^{(l)}((1 + \epsilon^{(l)})\bm{z_n}^{(l-1)} + \boldsymbol{\Gamma}) \\
\label{eq:e100}
\boldsymbol{\Gamma} = \sum_{r \in \boldsymbol{R}}\sum_{m \in \boldsymbol{\mathcal{N}^r}(n)} \frac{1}{k_{n, r}} \boldsymbol{W_r}^{(l-1)} \bm{z_m}^{(l - 1)}
\end{eqnarray}
where $r \in \boldsymbol{R}$ represents a relation (i.e., edge) type, $\boldsymbol{\mathcal{N}^r}(n)$ denotes the neighbor set of $n$ linked by $r, k_{n, r}$ is a normalized constant of $|\boldsymbol{\mathcal{N}^r}(n)|$, and $\boldsymbol{W_r}$ denotes the learnable attention weights shared by all relation types. Note that if the graph contains only one relation type, HGIN will degrade into GIN.

Different from RGCN \cite{schlichtkrull2018modeling}, which also uses relational attention for aggregation, HGIN employs an injective message passing strategy, enabled to pass the 1-WL test with less running time (see Theorem~\ref{thm:HGINequal1WL}). Moreover, HGIN is more favorable for GSL task since it focuses more on structural learning than RGCN.

\begin{theorem} \label{thm:GNN1WL}
Let $\operatorname{g_\theta}$ be a GNN with a sufficient number of layers.
If the following two conditions are met, $\operatorname{g_\theta}$ can map any graphs $\boldsymbol{G_i}$ and $\boldsymbol{G_j}$ to different embeddings when they are determined to be non-isomorphic by Weisfeiler-Lehman test:
\begin{enumerate}
    \item $\operatorname{g_\theta}$ aggregates and updates node features iteratively with
    \begin{eqnarray}\label{eq:e4}
    \bm{z_n}^{(l)} = \Upsilon(\bm{z_n}^{(l-1)}, \operatorname{g_\theta}({\bm{z_m}^{(l-1)}: m \in \boldsymbol{\mathcal{N}}(n)})
    \end{eqnarray}
    where $\Upsilon$ is injective.

    \item the graph-level readout of $\operatorname{g_\theta}$ is injective.
\end{enumerate}
\end{theorem}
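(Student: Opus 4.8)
The plan is to show, by induction on the layer index $l$, that the node embeddings produced by $\operatorname{g_\theta}$ are at least as discriminative as the colours produced by Weisfeiler--Lehman (WL) refinement, and then to lift this from nodes to whole graphs through the injective readout of condition~2. Concretely, write $c_n^{(l)}$ for the WL colour of node $n$ after $l$ refinement rounds, where a round replaces $c_n^{(l-1)}$ by $\operatorname{HASH}\!\bigl(c_n^{(l-1)},\{c_m^{(l-1)}: m\in\boldsymbol{\mathcal{N}}(n)\}\bigr)$ with $\operatorname{HASH}$ an injective map on (label, multiset) pairs. I would establish the claim: for every $l$ there is an \emph{injective} map $\varphi^{(l)}$ such that $\bm{z_n}^{(l)}=\varphi^{(l)}\bigl(c_n^{(l)}\bigr)$ for all nodes $n$ of both $\boldsymbol{G_i}$ and $\boldsymbol{G_j}$; that is, the GNN embedding at layer $l$ is in bijection with the WL colour at round $l$.

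For the base case $l=0$, both $\bm{z_n}^{(0)}$ and $c_n^{(0)}$ are determined by (and determine) the initial attribute of $n$ in $\boldsymbol{X}$, so $\varphi^{(0)}$ may be taken to be the identity. For the inductive step, assume $\varphi^{(l-1)}$ exists and is injective. An injective map on single labels induces an injective map on finite multisets of labels, so $\varphi^{(l-1)}$ applied componentwise is injective on neighbour multisets; and since $\operatorname{HASH}$ is injective it is invertible on its image, so $c_n^{(l)}$ uniquely recovers the pair $\bigl(c_n^{(l-1)},\{c_m^{(l-1)}: m\in\boldsymbol{\mathcal{N}}(n)\}\bigr)$, hence uniquely recovers $\bigl(\bm{z_n}^{(l-1)},\{\bm{z_m}^{(l-1)}: m\in\boldsymbol{\mathcal{N}}(n)\}\bigr)$. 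Passing this pair through the aggregate-and-update rule of condition~1 --- whose composite map $\Upsilon\circ(\mathrm{id},\text{AGG})$ is injective on such pairs --- produces exactly $\bm{z_n}^{(l)}$. Defining $\varphi^{(l)}$ to be this composition shows it is well defined and, as a composition of injective maps, injective, closing the induction.

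To finish: if the WL test declares $\boldsymbol{G_i}$ and $\boldsymbol{G_j}$ non-isomorphic, then at some finite round $K$ the colour multisets differ, $\{c_n^{(K)}: n\in\boldsymbol{V_i}\}\neq\{c_n^{(K)}: n\in\boldsymbol{V_j}\}$, and since $\operatorname{g_\theta}$ is assumed to have sufficiently many layers we take $l=K$. Applying the injective $\varphi^{(K)}$ entrywise transports this to $\{\bm{z_n}^{(K)}: n\in\boldsymbol{V_i}\}\neq\{\bm{z_n}^{(K)}: n\in\boldsymbol{V_j}\}$, because an injective map carries distinct multisets to distinct multisets. Condition~2 then sends these two distinct embedding multisets to distinct graph-level embeddings, which is the asserted conclusion.

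The main obstacle I anticipate is bookkeeping rather than conceptual. One must be explicit that ``injective $\Upsilon$'' is used in the strong sense that the map from the pair (own previous embedding, multiset of neighbour embeddings) to the new embedding is injective --- which presupposes the neighbourhood aggregator is itself injective on multisets, the content hidden inside condition~1 --- and one must justify both that injectivity lifts from a label set to finite multisets over it and that the injective WL hash is invertible on its image so that $c_n^{(l)}$ genuinely recovers the lower-level data. Once these points are spelled out, the remainder is a routine induction that mirrors the classical analysis of GIN's expressive power, and the same argument applies in the heterogeneous setting by reading ``WL'' as the relational colour refinement and ``AGG'' as the per-relation aggregation of Eq.~\eqref{eq:e100}.
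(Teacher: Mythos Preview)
Your proposal is correct and is precisely the classical GIN argument. Note, however, that the paper does not supply its own proof of Theorem~\ref{thm:GNN1WL}: immediately after the statement it simply writes ``This theorem has been proved in GIN \cite{xu2018powerful}'' and moves on to use it as a lemma for Theorem~\ref{thm:HGINequal1WL}. So there is no paper-proof to compare against beyond the citation, and what you have written is exactly the induction-on-layers argument from \cite{xu2018powerful} that the authors are invoking.
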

This theorem has been proved in GIN \cite{xu2018powerful}. On this basis, we further prove that HGIN possesses equivalent expressive power to the Weisfeiler-Lehman test.

\begin{theorem}\label{thm:HGINequal1WL}
HGIN possesses equivalent expressive power to the Weisfeiler-Lehman test.
\end{theorem}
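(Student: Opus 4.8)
The plan is to establish the claim by the standard two-direction argument used for GIN. Expressive power ``equivalent to the 1-WL test'' means two things: (i) HGIN is no more powerful than 1-WL, i.e.\ if 1-WL assigns the same color histogram to $\boldsymbol{G_i}$ and $\boldsymbol{G_j}$ then HGIN produces the same graph embedding; and (ii) HGIN is at least as powerful as 1-WL, i.e.\ if 1-WL distinguishes $\boldsymbol{G_i}$ and $\boldsymbol{G_j}$ then there is a choice of parameters for which HGIN produces different graph embeddings. Direction (ii) is exactly what Theorem~\ref{thm:GNN1WL} gives us once we verify its two hypotheses for HGIN, so the bulk of the work is checking those hypotheses and then proving direction (i).

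For direction (ii), I would first argue that the per-layer update in Eq.~\ref{eq:e3}--Eq.~\ref{eq:e100} can be written in the form of Eq.~\ref{eq:e4} with an injective $\Upsilon$. The key observation is that in the heterogeneous 1-WL refinement the relevant local information at node $n$ is the pair consisting of its own color and the \emph{multiset of (edge-type, neighbor-color) pairs} incident to it; equivalently, for each relation type $r$ the multiset $\{\!\{\bm{z_m}^{(l-1)} : m \in \boldsymbol{\mathcal{N}^r}(n)\}\!\}$. Since the $\boldsymbol{W_r}^{(l-1)}$ are independent learnable maps, one per relation type, the aggregate $\boldsymbol{\Gamma} = \sum_{r} \sum_{m \in \boldsymbol{\mathcal{N}^r}(n)} k_{n,r}^{-1}\boldsymbol{W_r}^{(l-1)}\bm{z_m}^{(l-1)}$ can be made to encode these $|\boldsymbol{R}|$ multisets jointly: placing the images of distinct relation types on (countably) disjoint, rationally independent coordinate blocks makes the sum over $r$ injective on tuples of multisets, and within each block the sum-aggregator argument from GIN (a universal-approximation / distinct-prime-power encoding of countable multisets) handles the sum over neighbors; the $1/k_{n,r}$ normalization is harmless because $k_{n,r}$ is itself recoverable from the multiset size. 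Combining this with the $(1+\epsilon^{(l)})\bm{z_n}^{(l-1)}$ self-term and the outer $\operatorname{MLP}^{(l)}$ (again invoking universal approximation on a countable domain) yields an injective $\Upsilon$. The graph-level readout is sum pooling over node embeddings (as described in the type-aligned graph matching module), which is injective on multisets of node colors by the same argument, so hypothesis~2 holds; Theorem~\ref{thm:GNN1WL} then delivers direction (ii).

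For direction (i), I would run the usual induction on layers showing that HGIN's node embeddings are at least as coarse as the heterogeneous 1-WL colors: if $\bm{z_n}^{(l-1)} = \bm{z_{n'}}^{(l-1)}$ whenever $n,n'$ have the same 1-WL color after $l-1$ rounds, then because Eq.~\ref{eq:e3}--Eq.~\ref{eq:e100} compute $\bm{z_n}^{(l)}$ as a fixed (deterministic) function of $\bm{z_n}^{(l-1)}$ and of the per-relation multisets of neighbor embeddings, two nodes with equal $l$-round 1-WL colors --- which by definition means equal previous colors and equal per-relation neighbor-color multisets --- get equal $\bm{z_n}^{(l)}$. Summing over nodes, graphs with identical 1-WL color histograms get identical HGIN graph embeddings, so HGIN cannot exceed 1-WL. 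Together with (ii) this gives equivalence. I would also note the degenerate-case remark already in the text: when $|\boldsymbol{R}| = 1$, HGIN reduces to GIN and the statement specializes to the known GIN result.

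The main obstacle is the injectivity argument in direction (ii): the shared structural form ``$\boldsymbol{W_r}$ shared by all relation types'' in the prose slightly undersells what is needed --- to separate the $|\boldsymbol{R}|$ relation-type multisets one genuinely needs the maps applied to different $r$ to have (essentially) disjoint ranges, which is available because each layer carries a distinct $\boldsymbol{W_r}^{(l-1)}$ per relation type; making this rigorous requires the countable-domain universal-approximation lemma from GIN (Lemma~5 / Corollary~6 there) adapted so that a single $\operatorname{MLP}^{(l)}$ decodes a block-structured sum. I would handle it by explicitly constructing, at the level of an existence proof, parameters that realize an injective $\Upsilon$ rather than trying to show every parameter choice works, which is all that is required for ``equivalent expressive power.'' The normalization constants $k_{n,r}$ and the fact that HGIN may be fed continuous rather than one-hot initial features are minor points: one restricts to the countable set of features actually arising in the finite graphs under consideration, exactly as in the GIN proof.
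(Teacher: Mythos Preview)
Your proposal follows the same skeleton as the paper for direction~(ii) --- verify the two hypotheses of Theorem~\ref{thm:GNN1WL} --- but is far more detailed than what the paper actually writes. The paper's entire proof is two lines: (1) HGIN, like GIN, uses an $\operatorname{MLP}$ as its per-layer update, so the aggregation is injective; (2) HGIN uses sum pooling as readout, so the readout is injective; done. It does not separately argue your direction~(i) (that HGIN cannot exceed 1-WL), and it does not engage with the per-relation structure of $\boldsymbol{\Gamma}$, the countable-domain reduction, or the normalization $1/k_{n,r}$. So your treatment strictly subsumes the paper's: you reproduce its appeal to Theorem~\ref{thm:GNN1WL} with a genuine justification of why a relation-typed sum can be made injective, and you supply the missing upper-bound direction, which is needed for the word ``equivalent.''

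One concrete gap in your direction~(ii) argument, however: your claim that ``the $1/k_{n,r}$ normalization is harmless because $k_{n,r}$ is itself recoverable from the multiset size'' does not hold up. After dividing by $k_{n,r} = |\boldsymbol{\mathcal{N}^r}(n)|$ you have a mean-type aggregate, and the map $M \mapsto |M|^{-1}\sum_{x\in M} f(x)$ is invariant under replacing $M$ by $M\uplus M$ for \emph{every} choice of $f$; hence no parameter choice for $\boldsymbol{W_r}$ can make the normalized per-relation sum injective on multisets. This is exactly the obstruction that makes mean aggregation strictly weaker than sum in the original GIN analysis, so the step where you absorb the normalization needs a different argument (or an additional assumption, e.g.\ bounded multiplicity or that $k_{n,r}$ is also fed to the $\operatorname{MLP}$). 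The paper's proof glosses over this point entirely, so it is a shared weakness rather than one peculiar to your write-up, but it is the place where your existence-of-injective-parameters construction would actually fail as stated.
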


\begin{proof}\label{thm:proof}
According to Theorem~\ref{thm:GNN1WL}, both the message aggregation and the readout functions of HGIN need to be proven to satisfy the injective property.
\begin{enumerate}
    \item As shown in Eq.~\ref{eq:e2} and Eq.~\ref{eq:e3}, both HGIN and GIN use MLP as message aggregation function $\Upsilon$. Therefore, the first condition is satisfied.
    \item HGIN utilizes sum pooling function as its readout function, which satisfies the injective condition. Therefore, the second condition is satisfied. $\hfill\qedhere$
\end{enumerate} 
\end{proof}

However, the attention weights could significantly increase the computational time especially when there is a large number of relation types. Therefore, we further utilize matrix decomposition technique \cite{schlichtkrull2018modeling} to reduce computational complexity. Specifically, each attention weight matrix $\boldsymbol{W_r}$ is decomposed into a linear combination of shared parameter $\boldsymbol{\mathcal{P}_b}$ and relation-specific parameter $\bm{w_{(r,b)}}$, formally defined as:
\begin{eqnarray}\label{eq:e5}
\boldsymbol{W_r} = \sum^\mathcal{B}_{b = 1}\bm{w_{r, b}} \boldsymbol{{\mathcal{P}}}_b
\end{eqnarray}
where $\mathcal{B}$ is a hyperparameter indicating the number of subcategories to classify the relation type set $\boldsymbol{R}$, $\bm{w_{r, b}}$ represents the learnable parameters specific to relation type $r$, and $\boldsymbol{\mathcal{P}}_b$ is the shared parameter across all relation types. Here, $\bm{w_{r, b}}$ can learn the contribution degree of each relation type and $\boldsymbol{\mathcal{P}}_b$ can learn the intrinsic consistency between different relation types which are often not semantically independent. For instance, in social networks, \textit{follow} and \textit{remark} are two different relation types, yet both represent preferences for others. 

\subsection{Type-aligned Graph Matching}
In this module, the graph-level matching is conducted based on the alignment of node types.

Firstly, given the node embedding matrix $\boldsymbol{Z} \in \mathbbm{R}^{N \times D}$, where $D$ is the dimension of node, sum pooling is adopted for each node type $c$ to generate its type embedding:
\begin{eqnarray}\label{eq:e7}
\boldsymbol{T} = \sum_{c \in \boldsymbol{C}}\sum_{n \in \boldsymbol{V}} \bm{z_n} \cdot \mathbbm{I}(c_n = c)
\end{eqnarray}
where $\boldsymbol{C}$ is the set of node types, $\boldsymbol{T} \in \mathbbm{R}^{|\boldsymbol{C}| \times D}$ is the result of sum pooling, and $\mathbbm{I}(c_n = c)$ is an indicator function. If $c_n = c$, $\mathbbm{I}$ takes the value 1; otherwise, it takes the value 0.

Secondly, the cross-graph alignment is designed based on $\boldsymbol{T_i}$. Specifically, MLP is used to match the embeddings with the same type $c$ (e.g., $\bm{t_{i,c}}$ and $\bm{t_{j,c}}$ represent the embedding of type $c$ in graph $\boldsymbol{G_i}$ and $\boldsymbol{G_j}$, respectively), thus obtaining  type-aligned cross-graph alignment.
\begin{eqnarray}\label{eq:e8}
\bm{t_{c}} = \operatorname{MLP}([\bm{t_{i, c}}, \bm{t_{j, c}}])
\end{eqnarray}
where $\bm{t_{i, c}}$ is the embedding of type $c$ in $\boldsymbol{G_i}$, and $[\cdot]$ is the concatenation operation.

Thirdly, the attention pooling is applied to the type embeddings as different node types may have varying importance. Specifically, it is achieved by the global type context:
\begin{eqnarray}\label{eq:e9}
\bm{a} = \tanh(\frac{1}{|\boldsymbol{C}|} \boldsymbol{W_a} \sum_{c \in \boldsymbol{C}} \bm{t_c})
\end{eqnarray}
where $\boldsymbol{W_a}$ is a learnable weight matrix. Then, the sigmoid function is used to normalize the inner product of attention coefficients and type embeddings, ensuring that the node types with similar global context will have higher attention weights. The overall graph embedding $\bm{h}$ is the weighted sum of node type embeddings:
\begin{eqnarray}\label{eq:graphemb}
\bm{h} = \sum_{c \in \boldsymbol{C}}\sigma(\bm{t_c}^{\intercal} \ \bm{a})\bm{t_c} 
\end{eqnarray}
where $\sigma(\cdot)$ is the sigmoid function.

Finally, the graph-level matching vector $\bm{h}'$ is achieved using MLP:
\begin{eqnarray}\label{eq:gembedding}
\bm{h}' = \operatorname{MLP}(\bm{h}) \ (\bm{h}' \in \mathbbm{R}^\mathcal{D'})
\end{eqnarray}
where $\mathcal{D'}$ is the dimension of output.

\subsection{Type-aligned Node Matching}
Node-level matching is often used as a supplement to graph-level matching, which learns the correlation between node pairs across graphs. This paper proposes a novel node matching strategy based on node types.

Firstly, the interaction attention values between node embeddings in two graphs are computed. Here, a type mask technique is designed to conceal attention values between nodes with different types. Specifically, the nodes in one graph are treated as queries and the nodes in the other graph as keys and values. We use the output of HGIN to compute multi-head interaction attention weights, and simultaneously learn $\boldsymbol{S_{i \to j}}$ and $\boldsymbol{S_{j \to i}}$ to represent bidirectional similarity matrices between $\boldsymbol{G_i}$ and $\boldsymbol{G_j}$. Here, $\boldsymbol{S_{i \to j}}(n,m)$ represents the attention of node $n$ in $\boldsymbol{G_i}$ to node $m$ in $\boldsymbol{G_j}$, and vice versa.

\begin{eqnarray}\label{eq:e12}
\boldsymbol{S_{i \to j}}(n, m) = softmax(\frac{\boldsymbol{\mathcal{Q}_m\mathcal{K}_n}^{\intercal}}{\sqrt{d_k}}) \cdot \mathbbm{I}(\Phi(m) = \Phi(n))
\end{eqnarray}

\begin{eqnarray}\label{eq:e13}
\boldsymbol{S_{j \to i}}(n, m) = softmax(\frac{\boldsymbol{\mathcal{Q}_n\mathcal{K}_m}^{\intercal}}{\sqrt{d_k}}) \cdot \mathbbm{I}(\Phi(n) = \Phi(m))
\end{eqnarray}
where $\boldsymbol{\mathcal{Q} = ZW_\mathcal{Q}}$ and $\boldsymbol{\mathcal{K} = ZW_\mathcal{K}}$. It is worth noting that only the attention weights between nodes with the same type are computed  with the help of type mask mechanism. 

Secondly, these similarity matrices are aligned to be compared in the same representation space with a self-attention mechanism. Specifically, the learned multiple head interaction similarity matrices are flattened to obtain a similarity vector $\bm{s}$, which is further normalized to facilitate smooth gradient propagation. The similarity matrix is updated as:

\begin{eqnarray}\label{eq:simtrix}
\boldsymbol{S} = \operatorname{softmax}(\frac{\boldsymbol{\mathcal{Q}} \cdot \boldsymbol{\mathcal{K}}^{\intercal}} {\sqrt{d_k}}) \cdot \boldsymbol{\mathcal{V}} \cdot \mathbbm{I}(\Phi(m) = \Phi(n))
\end{eqnarray}
where $d_k$ is a scaling factor, the query $\boldsymbol{\mathcal{Q}=S \cdot W_\mathcal{Q}}$, the key $\boldsymbol{\mathcal{K}=S \cdot W_\mathcal{K}}$, and the value $\boldsymbol{V=S \cdot W_V}$. Through the indicator function, the cross-graph attention is only calculated between the nodes of the same type, improving the running efficiency.

Finally, after alignment, the node-level similarity matrices are flattened and fed into a CNN to learn the overall features of node-level matching. We use the CNN based on cross filter \cite{tan2023exploring}, and the output is denoted as $\bm{s}' \in \mathbbm{R}^\mathcal{D}$, where $\mathcal{D}$ is a hyperparameter, representing the output dimension.

\subsection{Training On Graph Similarity Prediction}
In the last module, the graph-level matching vector $\bm{h}' \in \mathbbm{R}^\mathcal{D'}$ and the node-level matching vector $\bm{s}'\in \mathbbm{R}^\mathcal{D}$ are concatenated and fed into the FCLs to predict the similarity $\hat{s}_{i,j}$ of $\boldsymbol{G_i}$ and $\boldsymbol{G_j}$. We use MSE as the final loss function, which is defined as follows:

\begin{eqnarray}\label{eq:e15}
\mathcal{L} = \frac{1}{|\boldsymbol{T_r}|}\sum_{(i, j) \in \boldsymbol{T_r}}(\hat{s}_{i, j} - s_{i, j})^2
\end{eqnarray}
where $\boldsymbol{T_r}$ represents the set of trained graph pairs, and $s_{i,j}$ represents the ground-truth similarity of $\boldsymbol{G_i}$ and $\boldsymbol{G_j}$, i.e., their HGED value.

\section{Experiments}
In this section, we empirically evaluate the performance of the HeGMN method compared with recently proposed baselines on HGED regression task. Our code and data are available at https://github.com/alvinsang1906/HeGMN.

\subsection{Datasets}
Due to the large scale and high complexity of the existing heterogeneous graph datasets, we conduct random subgraphs sampling from three real heterogeneous graphs (i.e., ACM, DBLP, IMDB) using the Breadth-First Search algorithm (BFS) and then construct pairs of heterogeneous graphs. The number of nodes in the sampled subgraph is set to no more than 16 in order to obtain accurate HGED values \cite{blumenthal2020exact}. Moreover, the sampled graph is expected to contain abundant types to avoid degrading into a homogeneous graph. As a result, we not only limit the size of the sampled subgraphs, but also incorporate the elements with more diverse types present in the original graph. In addition to three sampled datasets mentioned above, the real-world compound dataset MUTAG is also utilized because the different kinds of atoms can be regarded as heterogeneous nodes. The statistics for all datasets are provided in Table~\ref{tab:statics}.

\begin{table}[!htbp]\small
    \centering
  \caption{Detail of datasets.}
  \label{tab:statics}
  \tabcolsep=0.018\linewidth
      \begin{tabular*}{\linewidth}{cccccc}
        \toprule
            \textbf{Datasets} & $\boldsymbol{|G|}$ & \textbf{Avg.}$\boldsymbol{|V|}$ & \textbf{Avg.}$\boldsymbol{|E|}$ & 
            \textbf{Types of} $\boldsymbol{V}$ & \textbf{Types of} $\boldsymbol{E}$ \\ 
        \midrule
            \textbf{ACM1000} & 1000 & 7.25 & 8.15 & 4 & 4  \\ 
            \textbf{DBLP700} & 700 & 9.04 & 8.42 & 3 & 3  \\ 
            \textbf{IMDB1200} & 1200 & 11.03 & 10.19 & 3 & 3  \\ 
            \textbf{MUTAG} & 188 & 17.93 & 19.79 & 7 & 4  \\ 
        \bottomrule
      \end{tabular*}
\end{table}

For each graph pair $(\boldsymbol{G_i}, \boldsymbol{G_j})$ in the sampling pool, we calculate their normalized HGED score as ground truths.
\begin{eqnarray}\label{eq:e16}
\operatorname{HGED}_{norm}(\boldsymbol{G_i}, \boldsymbol{G_j}) = exp({- \frac{\operatorname{HGED}(\boldsymbol{G_i}, \boldsymbol{G_j})}{(N + M) / 2}}) \in (0, 1]
\end{eqnarray}
where $N$ and $M$ represent the number of nodes in the two graphs, respectively. The larger the $\operatorname{HGED}_{norm}$ value is, the more structurally similar the two graphs are.

\subsection{Implementation Settings}
\label{subsection1}
All experiments are conducted on an Ubuntu 23.04 system equipped with two NVIDIA GTX 4090 GPUs and one AMD Ryzen 9 7950X CPU. For each experiment, we randomly divided the graph  into training, testing, and validation sets with proportions of 6:2:2, respectively. The training samples are paired graphs from the training set, while the validation and testing samples are paired graphs one of which is from the validation and testing sets, respectively, and the other of which is from the training set.  The batch size of the model is set to 128 and the model is trained for 10,000 epochs. The learning rate is set to 0.001 and AdamW is used as the experimental optimizer. Validation starts from the 100th epoch, and we employed an early stopping strategy, i.e., terminating training if there is no decrease in validation loss for consecutive 100 epochs, to select the final model based on the minimum validation loss.

In HeGMN model, a three-layer HGIN is used to learn node representations and set the output dimension of both graph-level matching vectors and node-level matching vectors to 128. Finally, we use four layers FCLs to output the predicted similarity results, with dimensions set to 128, 64, 32 and 1, respectively. 

\subsection{Comparison Methods}
We compared HeGMN against its competitors, including 4 base GNNs (GCN \cite{kipf2016semi}, GIN \cite{xu2018powerful}, GAT \cite{velivckovic2018graph}, RGCN \cite{schlichtkrull2018modeling}) using only graph-level matching and 8 advanced GSL methods (SimGNN \cite{bai2019simgnn}, GMN \cite{li2019graph}, GraphSim \cite{bai2020learning}, MGMN \cite{ling2021multilevel}, ERIC \cite{zhuo2022efficient}, TaGSim \cite{bai2021tagsim}, NA-GSL \cite{tan2023exploring}, GraSP \cite{zheng2024grasp}) using both graph-level and node-level matching. Given that existing GSL methods perform on homogeneous graphs, for fair comparison, we follow the heterogeneous data processing guidelines outlined in HGB \cite{lv2021we}, that is, heterogeneous graph datasets are preprocessed to obtain unified feature representations before feeding into respective models.

Experiments are conducted using the official settings for the above methods. For GMN, which is trained on classification task not on regression task, we replaced its last layer with FCLs. 

\subsection{Evaluation Metrics}
Our metric selection aligns with established practices in the GSL task. Recent state-of-the-art works \cite{bai2019simgnn, tan2023exploring} adopt $MSE$ for numerical accuracy and Spearman/Kendall's correlation coefficient \cite{spearman1904proof, kendall1938new} for ranking consistency. This convention stems from two key considerations: 
\begin{itemize}
    \item $MSE$, the mean squared error between the predicted similarity scores and the ground truths, directly measures regression error magnitude critical for downstream applications like edit cost estimation.
    \item Ranking correlation metrics (Spearman/Kendall) are standard for evaluating retrieval systems where ordering preservation is essential. While these metrics do not explicitly capture structural edit path fidelity, their widespread adoption ensures fair comparison with prior art.
\end{itemize}

The following metrics are also used as the complement which evaluate ranking results: $p@k$, which evaluates the intersection of the predicted top $k$ results and the true top $k$ results; $Time$, on the basis of analyzing the time complexity, we further verified the accuracy of the analysis by recording the time required to compute the similarity of the same graph pairs.

\subsection{Results}
\paragraph{Heterogeneous graph similarity regression}
\begin{table}[p]\large
\centering
\setlength{\tabcolsep}{2pt}
\caption{Comparison of HGED approximations on ACM1000 and DBLP700 for MSE, $\rho$, $\tau$, $p@10$, $p@20$. The unit of MSE value is $10^{-3}$. For each column, the best results are in bold and the second best results are underlined. The last row represents the difference in performance between HeGMN and the remaining optimal method.}
\resizebox{\textwidth}{!}{
\begin{tabular}{c ccccc ccccc}
\toprule
    & \multicolumn{5}{c}{ACM1000} 
    & \multicolumn{5}{c}{DBLP700} \\
    & $MSE \downarrow$ & $\rho \uparrow$ & $\tau \uparrow$ & $p@10 \uparrow$ & $p@20 \uparrow$
    & $MSE \downarrow$ & $\rho \uparrow$ & $\tau \uparrow$ & $p@10 \uparrow$ & $p@20 \uparrow$ \\
\cmidrule(r){2-6} 
\cmidrule(r){7-11} 
    GCN-MEAN & 6.772 & 0.950  & 0.851 & 0.391 & 0.498 & 2.516 & 0.962  & 0.882 & 0.757 & 0.769 \\ 
    GIN-MEAN & 6.156 & 0.924  & 0.808 & 0.318 & 0.419 & 2.307 & 0.959  & 0.881 & 0.775 & 0.801 \\ 
    GAT-MEAN & 6.097 & 0.922  & 0.799 & 0.314 & 0.399 & 4.012 & 0.940  & 0.841 & 0.629 & 0.650 \\ 
    RGCN-MEAN & 1.408 & 0.983  & 0.914 & 0.662 & 0.741 & 0.895 & 0.984  & 0.922 & 0.865 & 0.870 \\
\midrule[0.2pt]
    SimGNN & 4.564 & 0.945  & 0.835 & 0.380 & 0.463 & 2.934 & 0.953  & 0.854 & 0.705 & 0.730 \\ 
    GMN & 5.435 & 0.934  & 0.809 & 0.288 & 0.383 & 1.541 & 0.972  & 0.895 & 0.832 & 0.845 \\ 
    GraphSim & 4.229 & 0.939  & 0.823 & 0.436 & 0.530 & 3.149 & 0.948  & 0.840 & 0.653 & 0.690 \\ 
    MGMN & 5.023 & 0.904 & 0.841 & 0.521 & 0.558 & 1.631 & 0.937 & 0.880 & 0.759 & 0.764 \\ 
    ERIC & 3.731 & 0.901 & 0.836 & 0.484 & 0.536 & 1.309 & 0.955 & 0.891 & 0.776 & 0.778 \\ 
    TaGSim & 3.633 & 0.910 & 0.827 & 0.534 & 0.608 & 1.398 & 0.961 & 0.887 & 0.743 & 0.746 \\
    NA-GSL & 3.440 & 0.956 & 0.848 & 0.579 & 0.637 & 1.469 & 0.948 & 0.884 & 0.762 & 0.776 \\
    GraSP & 4.101 & 0.917 & 0.832 & 0.479 & 0.534 & 1.477 & 0.951 & 0.883 & 0.756 & 0.771 \\
\midrule[0.2pt]
    HGIN-MEAN & $\underline{1.039}$ & $\underline{0.987}$ & $\underline{0.926}$ & $\underline{0.773}$ & $\underline{0.847}$ & $\underline{0.881}$ & $\underline{0.985}$ & $\underline{0.928}$ & $\underline{0.874}$ & $\underline{0.883}$ \\ 
    \makecell[c]{HeGMN \\ ~} & \makecell[c]{$\textbf{0.792}$ \\ $\downarrow 43.75\%$} & \makecell[c]{$\textbf{0.989}$ \\ $\uparrow0.61\%$}  & \makecell[c]{$\textbf{0.934}$ \\ $\uparrow2.19\%$} & \makecell[c]{$\textbf{0.797}$ \\ $\uparrow20.39\%$} & \makecell[c]{$\textbf{0.848}$ \\ $\uparrow14.44\%$} & \makecell[c]{\textbf{0.712} \\ $\downarrow20.45\%$} & \makecell[c]{$\textbf{0.986}$ \\ $\uparrow0.20\%$}  & \makecell[c]{$\textbf{0.931}$ \\ $\uparrow0.98\%$} & \makecell[c]{$\textbf{0.891}$ \\ $\uparrow3.01\%$} & \makecell[c]{$\textbf{0.888}$ \\ $\uparrow2.07\%$} \\ 
\bottomrule
\end{tabular}
}
\label{tab: t2}
\end{table}

\begin{table}[p]\large
\centering
\setlength{\tabcolsep}{2pt}
\caption{Comparison of HGED approximations on IMDB1200 and MUTAG for MSE, $p@10$, $p@20$.}
\resizebox{\textwidth}{!}{
\begin{tabular}{c ccccc ccccc}
\toprule
    & \multicolumn{5}{c}{IMDB1200} 
    & \multicolumn{5}{c}{MUTAG} \\
    & $MSE \downarrow$ & $\rho \uparrow$ & $\tau \uparrow$ & $p@10 \uparrow$ & $p@20 \uparrow$
    & $MSE \downarrow$ & $\rho \uparrow$ & $\tau \uparrow$ & $p@10 \uparrow$ & $p@20 \uparrow$ \\
\cmidrule(r){2-6} 
\cmidrule(r){7-11} 
    GCN-MEAN & 10.828 & 0.790  & 0.633 & 0.095 & 0.144 & 5.933 & 0.922  & 0.792 & 0.555 & 0.670 \\ 
    GIN-MEAN & 11.670 & 0.785  & 0.629 & 0.093 & 0.150 & 5.208 & 0.940  & 0.823 & 0.629 & 0.689 \\ 
    GAT-MEAN & 11.047 & 0.788  & 0.630 & 0.084 & 0.130 & 8.863 & 0.920  & 0.795 & 0.547 & 0.636 \\ 
    RGCN-MEAN & 2.547 & 0.947  & 0.837 & 0.473 & 0.573 & $\underline{3.398}$ & $\underline{0.953}$ & $\underline{0.853}$ & $\underline{0.704}$ & $\underline{0.757}$\\
\midrule[0.2pt]
    SimGNN & 10.523 & 0.788  & 0.632 & 0.091 & 0.133 & 4.761 & 0.935  & 0.811 & 0.647 & 0.724 \\ 
    GMN & 10.183 & 0.809  & 0.654 & 0.122 & 0.178 & 4.393 & 0.950  & 0.837 & 0.668 & 0.745 \\ 
    GraphSim & 11.270 & 0.769  & 0.609 & 0.086 & 0.128 & 5.016 & 0.918 & 0.789 & 0.563 & 0.678 \\ 
    MGMN & 9.768 & 0.830 & 0.721 & 0.396 & 0.429 & 4.272 & 0.911 & 0.790  & 0.625 & 0.673 \\ 
    ERIC & 8.448 & 0.856 & 0.741 & 0.398 & 0.434 & 4.064 & 0.906 & 0.774 & 0.605 & 0.659 \\ 
    TaGSim & 7.341 & 0.868 & 0.750 & 0.376 & 0.413 & 4.135 & 0.897 & 0.784 & 0.627 & 0.660 \\
    NA-GSL & 7.581 & 0.879 & 0.760 & 0.412 & 0.455 & 3.998 & 0.937 & 0.813 & 0.641 & 0.702 \\
    GraSP & 7.699 & 0.849 & 0.733 & 0.383 & 0.441 & 4.532 & 0.892 & 0.753 & 0.612 & 0.631 \\
\midrule[0.2pt]
    HGIN-MEAN & $\underline{2.372}$ & $\underline{0.953}$ & $\underline{0.846}$ & $\underline{0.504}$ & $\underline{0.584}$ & 3.818 & 0.945  & 0.833 & 0.653 & 0.731 \\ 
    \makecell[c]{HeGMN \\ ~} & \makecell[c]{$\textbf{2.070}$ \\ $\downarrow18.73\%$} &\makecell[c]{$\textbf{0.954}$ \\ $\uparrow0.74\%$}  & \makecell[c]{$\textbf{0.849}$ \\ $\uparrow1.43\%$}  & \makecell[c]{$\textbf{0.526}$ \\ $\uparrow11.21\%$} & \makecell[c]{$\textbf{0.526}$ \\ $\uparrow15.36\%$} & \makecell[c]{$\textbf{0.661}$ \\ $\downarrow 17.92\%$} & \makecell[c]{$\textbf{0.958}$ \\ $\uparrow0.52\%$}  & \makecell[c]{$\textbf{0.858}$ \\ $\uparrow0.59\%$} & \makecell[c]{$\textbf{0.717}$ \\ $\uparrow1.85\%$} & \makecell[c]{$\textbf{0.782}$ \\ $\uparrow3.30\%$} \\ 
\bottomrule
\end{tabular}
}
\label{tab: t3}
\end{table}

The experimental results of all methods are summarized in Table~\ref{tab: t2} and Table~\ref{tab: t3}. HGIN-MEAN is the variant of HeGMN which uses three layers of HGIN as the encoder and mean pooling on node representations to obtain graph-level representation. 

As shown in Table~\ref{tab: t2} and Table~\ref{tab: t3}, the GSL methods equipped with node-level matching generally outperform those with only graph-level matching, consistent with previous observations on homogeneous GSL experiments.

HGIN-MEAN significantly outperforms existing base GNNs methods (GCN-MEAN, GIN-MEAN and GAT-MEAN) which use homogeneous graph encoder. For example, HGIN-MEAN shows an improvement of 82.26\% compared to GIN-MEAN. It verifies the significance of encoding type information in representation for HGSL task. HGIN-MEAN outperforms RGCN-MEAN on most datasets (i.e., ACM1000, DBLP700 and IMDB1200), probably because the injective function in HGIN is endowed with better representation ability. However, on the MUTAG dataset, HGIN's performance slightly lags behind RGCN. Considering the learnable parameters of RGCN are determined by the number of neighbors, unlike HGIN, which is determined by the number of neighbor types, RGCN could capture more information on larger graphs.

Overall, HeGMN consistently outperforms all its competitors, achieving highest performance across all four datasets. For example, HeGMN improves HGIN-MEAN by adding an efficient type-aligned node matching branch. HeGMN significantly surpasses RGCN-MEAN  by 60.9\%, which soundly verifies the effectiveness of node-level matching. It is also observed that the larger the scale of the dataset, the more significant the improvement of HeGMN over HGIN-MEAN. It is because the structure and semantics of a larger heterogeneous graph are more complex which can be better captured by the type-aligned node matching module in HeGMN.

\paragraph{Ablation study}
To further study the effectiveness of key modules in HeGMN, we conducted a series of ablation experiments. All variants and their results are shown in Table~\ref{tab: t4} and Table~\ref{tab: t5}. HeGMN-w GIN replaces HGIN in HeGMN with plain GIN for encoding. HeGMN-w/o node-match removes the node matching module in HeGMN. HeGMN-w/o type-mask uses node-level matching but computes interactions between all inter-graph nodes without masking irrelevant types.

\begin{table}[!]\large
\centering
\setlength{\tabcolsep}{2pt}
\caption{Ablation experiment results on ACM1000 and DBLP700 for MSE, $\rho$, $\tau$, $p@10$, $p@20$.}
\resizebox{\textwidth}{!}{
\begin{tabular}{c ccccc ccccc}
\toprule
    & \multicolumn{5}{c}{ACM1000} 
    & \multicolumn{5}{c}{DBLP700} \\
    & $MSE \downarrow$ & $\rho \uparrow$ & $\tau \uparrow$ & $p@10 \uparrow$ & $p@20 \uparrow$
    & $MSE \downarrow$ & $\rho \uparrow$ & $\tau \uparrow$ & $p@10 \uparrow$ & $p@20 \uparrow$ \\
\cmidrule(r){2-6} 
\cmidrule(r){7-11} 
    w GIN & 5.161 & 0.920 & 0.801 & 0.482 & 0.609 & 2.007 & 0.960 & 0.893 & 0.795 & 0.832 \\
    w/o node-match & 0.980 & 0.967 & 0.930 & 0.789 & 0.844 & 0.853 & 0.983 & 0.928 & 0.885 & 0.880\\
    w/o type-mask & 0.986 & 0.979 & 0.911 & 0.654 & 0.801 & 1.241 & 0.950 & 0.923 & 0.825 & 0.836\\
\midrule[0.2pt]
    HeGMN & $\textbf{0.792}$ & $\textbf{0.989}$ & $\textbf{0.934}$ & \textbf{0.797}& \textbf{0.848}&  $\textbf{0.712}$ & $\textbf{0.986}$ & $\textbf{0.931}$ & \textbf{0.891}& \textbf{0.888}\\
\bottomrule
\end{tabular}
}
\label{tab: t4}
\end{table}

\begin{table}[!]\large
\centering
\setlength{\tabcolsep}{2pt}
\caption{Ablation experiment results on IMDB1200 and MUTAG for MSE, $\rho$, $\tau$, $p@10$, $p@20$.}
\resizebox{\textwidth}{!}{
\begin{tabular}{c ccccc ccccc}
\toprule
    & \multicolumn{5}{c}{IMDB1200} 
    & \multicolumn{5}{c}{MUTAG} \\
    & $MSE \downarrow$ & $\rho \uparrow$ & $\tau \uparrow$ & $p@10 \uparrow$ & $p@20 \uparrow$
    & $MSE \downarrow$ & $\rho \uparrow$ & $\tau \uparrow$ & $p@10 \uparrow$ & $p@20 \uparrow$ \\
\cmidrule(r){2-6} 
\cmidrule(r){7-11} 
    w GIN & 10.773 & 0.789 & 0.631& 0.193&  0.251& 4.801 & 0.933 & 0.880&  0.649 & 0.709\\
    w/o node-match & 2.213 & 0.949 &  0.847& 0.517 & 0.592 & 3.503 & 0.943 & 0.847& 0.679& 0.744 \\
    w/o type-mask &  2.427&  0.935& 0.796 & 0.410& 0.452& 4.091 & 0.914 & 0.808& 0.618& 0.701 \\
\midrule[0.2pt]
    HeGMN & $\textbf{2.070}$ & $\textbf{0.954}$ & $\textbf{0.849}$ & $\textbf{0.526}$ & $\textbf{0.661}$ & $\textbf{2.789}$ & $\textbf{0.958}$ & $\textbf{0.858}$ & $\textbf{0.717}$ & $\textbf{0.782}$ \\
\bottomrule
\end{tabular}
}
\label{tab: t5}
\end{table}

As shown in Table~\ref{tab: t4} and Table~\ref{tab: t5}, substituting HGIN with GIN results in a performance decrease, indicating the need to incorporate type information when representing heterogeneous nodes. Removing the node-level matching module (i.e., HeGMN-w/o node-match) leads to varying degrees of performance degradation across four datasets. The comparison between HeGMN and HeGMN-w/o type-mask further illustrates that masking nodes with different types is effective, probably because matching between nodes with different types will cause noise. Moreover, the type mask mechanism can highly reduce the time consumption of the model, which will be analyzed in the next section.

\begin{figure}[!htbp]
    \centering
    \subfigure[Number of Layers]{
        \label{fig: fig5_1}
        \includegraphics[width=0.45\textwidth]{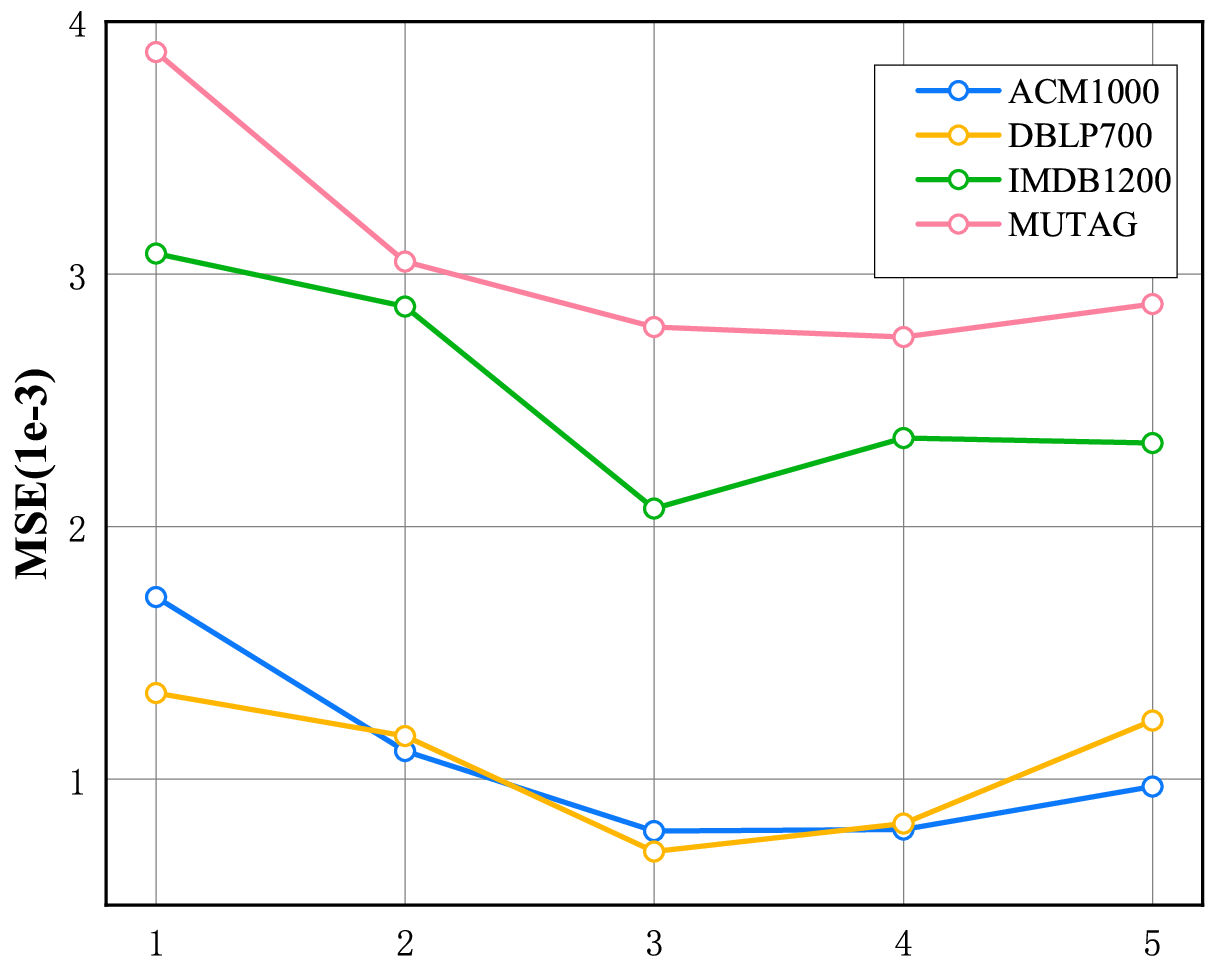}
    }
    \subfigure[$\mathcal{B}$]{
        \label{fig: fig5_2}
        \includegraphics[width=0.45\textwidth]{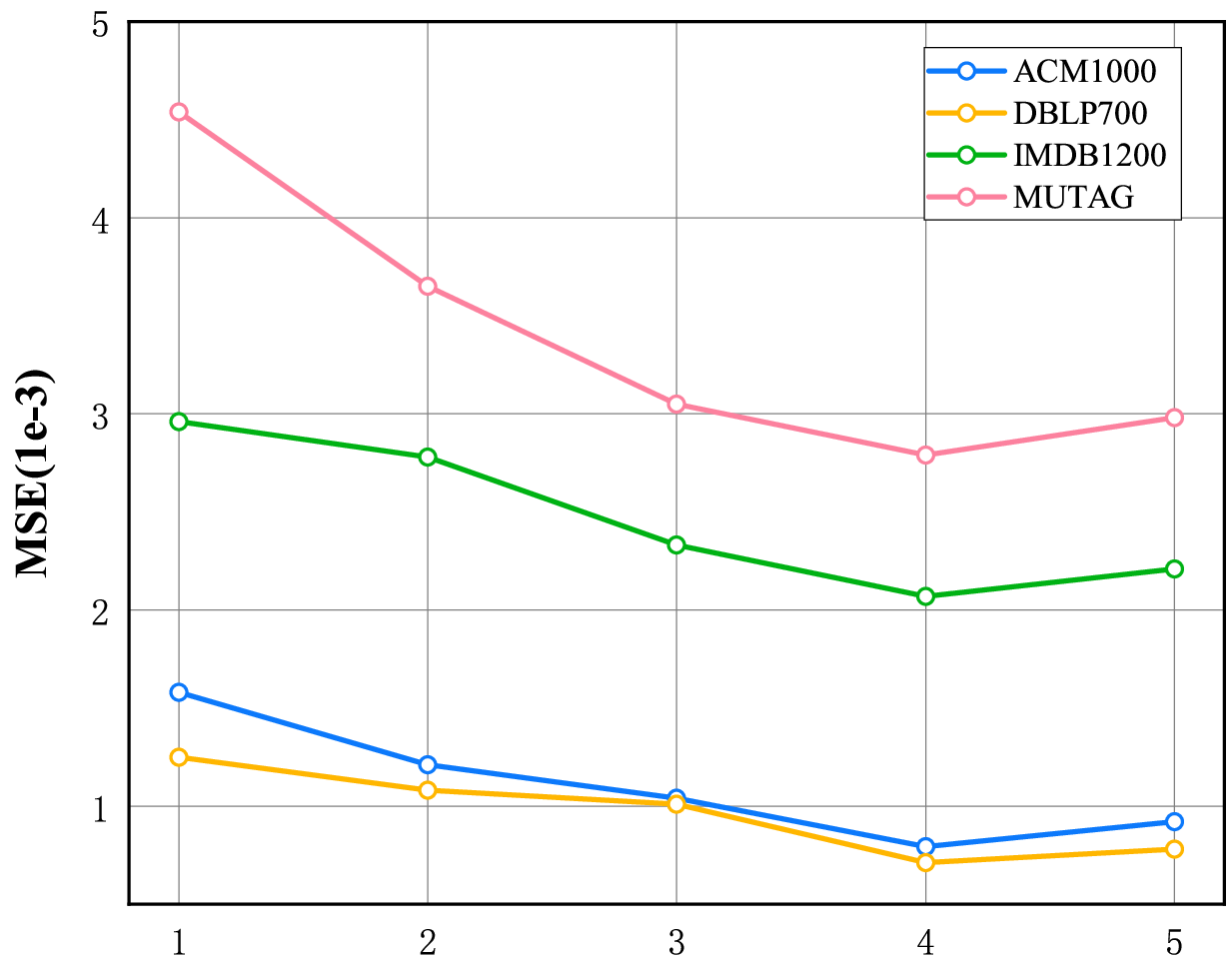}
    }
    \caption{(a) The MSE results when the number of HGIN layers is $[1, 2, 3, 4, 5]$. (b) The MSE results when the hyperparameter $\mathcal{B}$ in matrix decomposition is $[1, 2, 3, 4, 5]$.}
    \label{fig: fig5}
\end{figure}

\begin{figure*}[h]
    \centering
    \includegraphics[width=\textwidth]{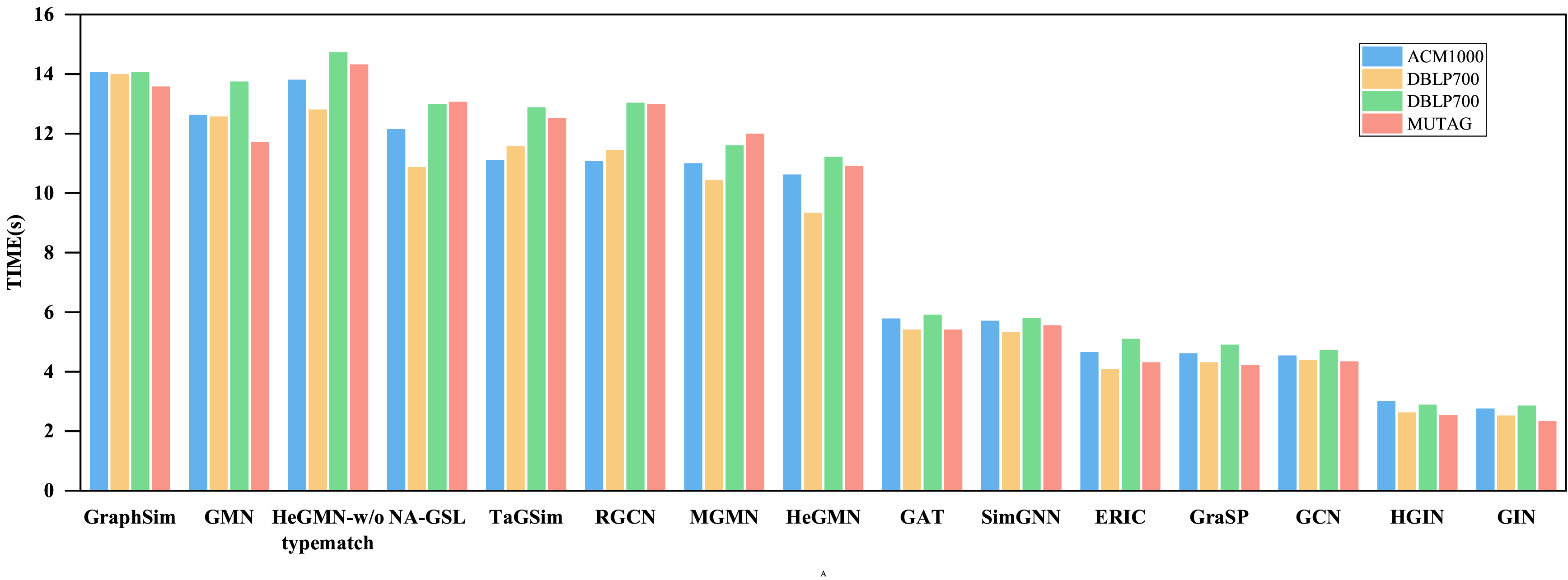}
    \caption{Comparison of running time (in seconds) to calculate the similarity of the same 1280 graph pairs.}
    \label{fig:figure4}
\end{figure*}

\paragraph{Time complexity analysis and efficiency in running time}
Running time is another important evaluation aspect of deep GSL methods, since it is the main advantage over the traditional GSC methods. For all methods, we computed the similarity on the same 1280 graph pairs and recorded the time consumption in Fig.~\ref{fig:figure4}.

\begin{itemize}
    \item \textbf{Heterogeneous node encoding.} A siamese HGIN is employed in the node encoding module to embed nodes from two input heterogeneous graphs. For each graph, the time complexity is dominated by:
    \begin{eqnarray}
        O(L \cdot (|E| \cdot d + |V|^2 \cdot d) )
    \end{eqnarray}
    where $L$ is the number of HGIN layers, $|E|, |V|$ are the number of edges and nodes in a heterogeneous graph, respectively, $d$ is the dimension of feature.
    \item \textbf{Type-aligned graph matching.} Sum pooling aggregates node embeddings by type for each graph. The complexity of sum pooling is: $O(2 \cdot |V| \cdot d)$, where $|V|$ is the total nodes across all types in one graph. For each node type $t \in T$, an MLP matches type-specific representations from two graphs. Assuming a single-layer MLP with input dimension $2d$ and output dimension $k$, the time complexity is: $O(T \cdot (2d \cdot k))$. Overall, the total time complexity of graph matching is dominated by:
    \begin{eqnarray}
        O(2 \cdot |V| \cdot d + T \cdot 2d \cdot k)
    \end{eqnarray}
    \item \textbf{Type-aligned node matching.} Computing pairwise similarities between nodes of type $t$ in the two heterogeneous graphs, the complexity is:
    \begin{eqnarray}
        O(\sum_{t \in T} (|V_{1, t}| \cdot |V_{2, t}| \cdot d))
    \end{eqnarray}
    where $|V_{1, t}|, |V_{2, t}|$ are the number of type-$t$ nodes in two graph $G_1, G_2$, respectively.
\end{itemize}
Generally, the GSL methods with node-level matching, despite the excellent performance, exhibit slower computation speeds than those with only graph-level matching. It is usually time-consuming to compute interactions between all pairs of nodes across graphs. Although SimGNN consumes less time owing to the use of histograms for processing, its performance improvement is limited since the histograms are non-differentiable and cannot be learned. Meanwhile, the running efficiency of HGIN is slightly lower than that of GIN as the attention weights set on different types of neighbors involve additional matrix multiplication calculations. However, HGIN still outperforms other encoders in terms of processing speed, indicating the effectiveness of matrix decomposition module. Among all GSL models with node-level matching, HeGMN consumes the least time, verifying the type mask strategy can significantly improve the efficiency of heterogeneous graph matching. Meanwhile, it also provides some insights into the optimization of homogeneous graph matching, particularly in node grouping interaction.

\paragraph{Parameter sensitivity}
HeGMN takes two hyperparameters, i.e., the number of HGIN layers and the parameter $\mathcal{B}$, which represents the reclassification of relation types in matrix decomposition. The results of parameter sensitivity analysis (see Fig.~\ref{fig: fig5}) show that HeGMN remains relatively stable as parameters vary. It is worth noting that $\mathcal{B}$ has a greater impact on model performance on the MUTAG dataset. The possible reason is that the MUTAG dataset covers 7 node types, and when $\mathcal{B}$ is small, multiple node types are treated as a single category, resulting in information loss. The minimum MSE is consistently achieved across four datasets when the number of HGIN layers is 3 and $\mathcal{B}$ is 4, which are therefore the default settings of HeGMN.

\begin{figure}[htp]
\centering
    \subfigure[ACM1000]{
        \includegraphics[width=0.6\linewidth, height=3.9cm]{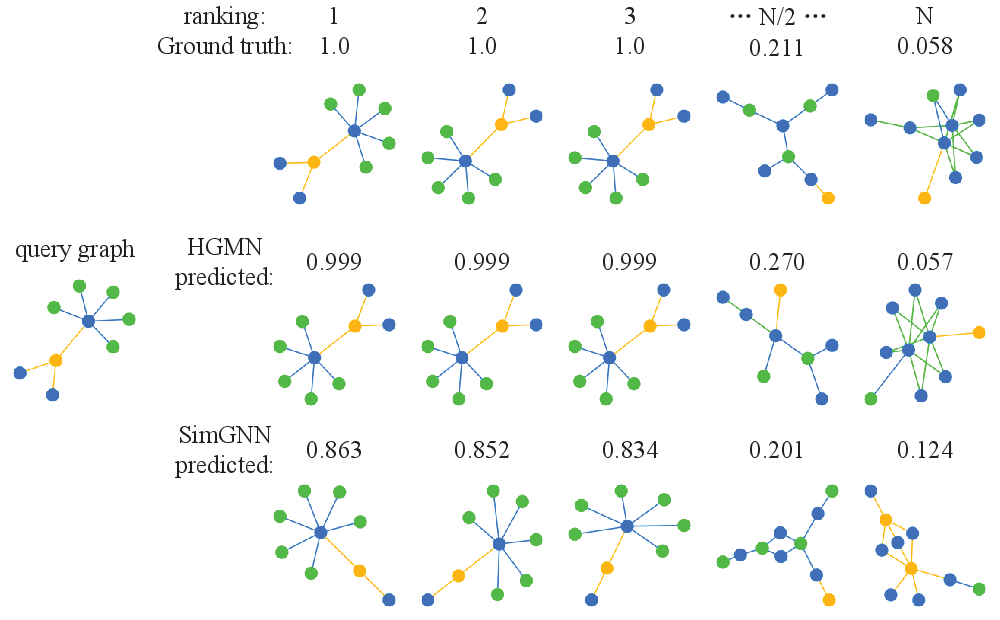}
        \label{fig:fig6a}
    }
    \subfigure[DBLP700]{
        \includegraphics[width=0.6\linewidth, height=3.9cm]{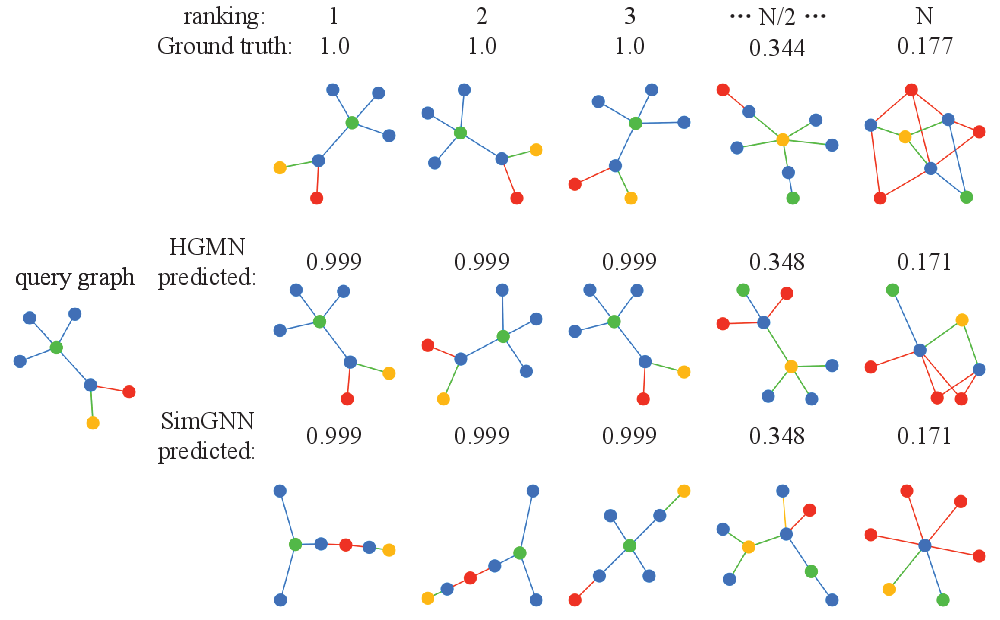}
        \label{fig:fig6b}
    }
    \subfigure[IMDB1200]{
        \includegraphics[width=0.6\linewidth, height=3.9cm]{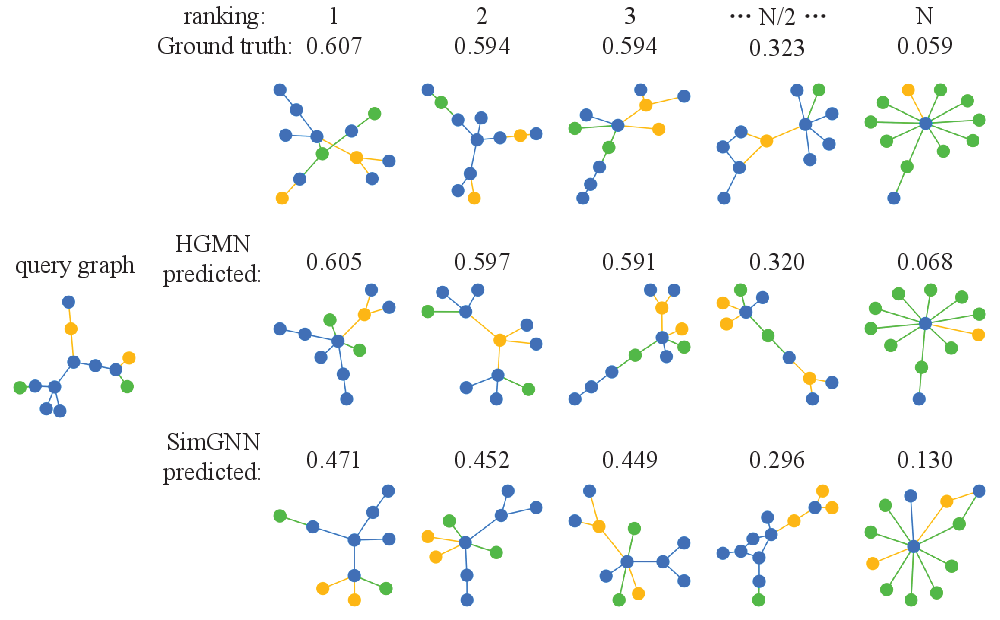}
        \label{fig:fig6c}
    }
    \subfigure[MUTAG]{
        \includegraphics[width=0.6\linewidth, height=3.9cm]{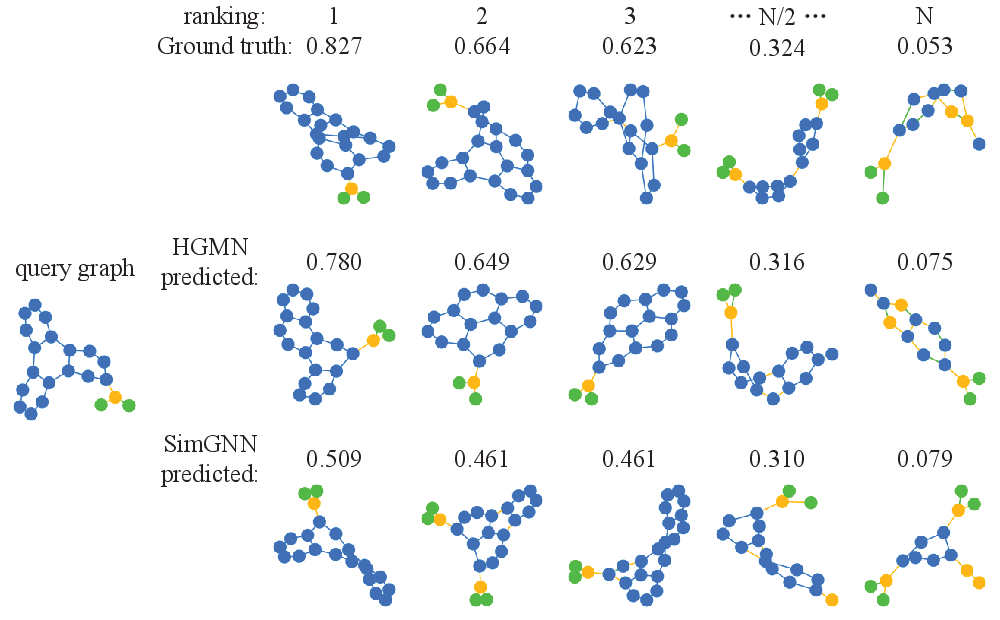}
        \label{fig:fig6d}
    }
    \caption{Case study of graph retrieval.}
    \label{fig:figcase}
\end{figure}

\subsection{Case studies}
To further investigate the retrieval capability of HeGMN, we conducted graph query experiments on ACM1000, DBLP700, IMDB1200, and MUTAG datasets, as depicted in Fig.~\ref{fig:figcase}. In each case, the leftmost graph represents the query graph taken from the training dataset and the right graphs are the retrieved graphs. Similar experiments were conducted on SimGNN for comparison. The retrieval result shows that HeGMN can retrieve graphs that are relatively more similar to the query graph, particularly on smaller-scale graph datasets like ACM1000 and DBLP700. For example, the top-3 query results are isomorphic to the query graph. For larger-scale graphs such as IMDB1200 and MUTAG, HeGMN is also able to retrieve sufficiently approximate results. It is worth noting that the task of graph searching can be addressed using graph matching but we did not compare with index-based graph searching methods as they essentially serve different purposes. HeGMN aims to learn accurately the similarity between pairs of graphs rather than search efficiently similar graphs in a graph database.

\section{Conclusion}
Current graph matching methods based on neural networks lack work on heterogeneous graph matching. In this study, we recognize that different types of nodes have varying semantics and importance and thus design heterogeneous graph isomorphic network to learn richer representations. More importantly, we propose a two-tier (graph-level and node-level) matching by discriminating node types. Extensive experiments demonstrate the superior matching performance of the proposed framework and the effectiveness of its individual components. Future work will explore the impact of edge semantics on matching and conduct the improvement by distinguishing the importance of element types.

\section*{Acknowledgement}
This research was supported by the National Natural Science Foundation of China (Grant No.62476137) and the Natural Science Foundation of the Jiangsu Higher Education Institutions of China (Grant No.21KJB520017).

\end{document}